\documentclass[11pt]{article}

\usepackage[final]{acl}

\usepackage{times}
\usepackage{latexsym}

\usepackage[T1]{fontenc}

\usepackage[utf8]{inputenc}

\usepackage{microtype}

\usepackage{inconsolata}

\usepackage{bm}
\usepackage{amsmath, amssymb, amsfonts, amsthm}
\usepackage{graphicx}
\usepackage{booktabs}
\usepackage{multirow}
\usepackage{amsfonts}
\usepackage{color,xcolor,colortbl}
\usepackage{adjustbox}
\usepackage{nicematrix}
\usepackage{tabularx}
\usepackage{subcaption}
\usepackage{cleveref}
\usepackage{makecell}
\usepackage[symbol]{footmisc}
\usepackage[inline, shortlabels]{enumitem}

\newtheorem{prop}{Proposition}

\title{Learning Uncertainty from Sequential Internal Dispersion \\in Large Language Models}

\author{
    Ponhvoan Srey$^1$ \enskip
    Xiaobao Wu$^2$\thanks{Corresponding Authors.} \enskip
    Cong-Duy Nguyen$^3$ \enskip
    Anh Tuan Luu$^{1,3}$\footnotemark[1]\\
    $^1$Nanyang Technological University \quad
    $^2$Shanghai Jiao Tong University\\
    $^3$Centre for AI Research, VinUniversity\\
    \texttt{\{ponhvoan002, anhtuan.luu\}@ntu.edu.sg}\\
    \texttt{duy.ntc@vinuni.edu.vn} \quad  \texttt{xiaobaowu@sjtu.edu.cn}
}

\begin{document}
\maketitle
\begin{abstract}

Uncertainty estimation is a promising approach to detect hallucinations in large language models (LLMs).
Recent approaches commonly depend on model internal states to estimate uncertainty.
However, they suffer from strict assumptions on how hidden states should evolve across layers, and from information loss by solely focusing on last or mean tokens.
To address these issues, we present Sequential Internal Variance Representation (SIVR), a supervised hallucination detection framework that leverages token-wise, layer-wise features derived from hidden states.
SIVR adopts a more basic assumption that uncertainty manifests in the degree of dispersion or variance of internal representations across layers, rather than relying on specific assumptions, which makes the method model and task agnostic.
It additionally aggregates the full sequence of per-token variance features, learning temporal patterns indicative of factual errors and thereby preventing information loss.
Experimental results demonstrate SIVR consistently outperforms strong baselines.
Most importantly, SIVR enjoys stronger generalisation and avoids relying on large training sets, highlighting the potential for practical deployment.\footnote[2]{Our code repository is available online at \url{https://github.com/ponhvoan/internal-variance}.}

\end{abstract}

\section{Introduction}

\begin{figure*}[!ht]
    \centering
    \includegraphics[width=\linewidth]{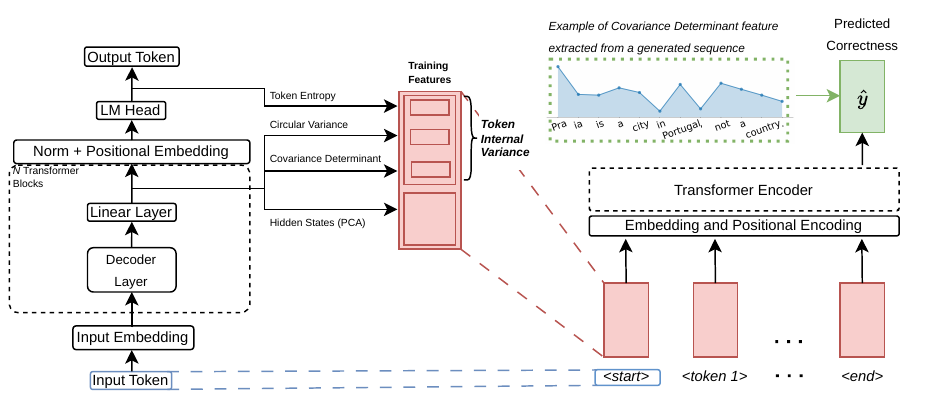}
    \caption{Illustration of our SIVR.
    At each generated token, we extract LLM hidden states, and compute their internal variance, consisting of token entropy, circular variance, and covariance determinant. We utilise these as informative features for sequence classification of response correctness with a simple transformer encoder architecture.}
    \label{fig:reprDiv}
\end{figure*}

Large Language Models (LLMs) have demonstrated impressive growth in performance across a wide array of tasks \citep{achiam2023gpt, grattafiori2024llama}. Increasingly, they are being deployed in complex applications that require sophisticated reasoning, such as coding and mathematical reasoning \citep{guo2025deepseek}. Despite widespread adoption, LLMs invariably suffer from unreliable generation or hallucination, frequently providing fictitious answers with complete confidence \citep{zhang2023siren}. Due to their convincing response, it is challenging for users to determine factual correctness. This poses a major setback to LLM deployment, especially in high-risk domains.

Uncertainty estimation has emerged as a prominent solution to identify incorrect generations by aiming to accurately quantify the level of uncertainty in the response \citep{gal2016dropout, hendrycks2016baseline, lakshminarayanan2017simple}. Robust and accurate uncertainty estimation enables users to determine the level of trust to place in the LLM response, and intervene as necessary. However, recent approaches remain suboptimal, even compared to simple but effective methods, such as computing the entropy of the predictive output probability distribution \citep{vashurin2025benchmarking}. Second, a large majority relies on stochastic sampling to measure the consistency between a set of answers \citep{kuhn2023semantic, lin2024generatingconfidence, manakul2023selfcheckgpt}. This introduces high computational overhead, rendering such methods impractical in real use cases.

Closely related is model probing, which involves training a lightweight classifier on top of last or mean hidden states to identify falsehoods by extracting world knowledge embedded in their internal states \citep{burns2022discovering, li2023inference, azaria2023internal, ji2024llm, marks2023geometry}. These techniques are computationally efficient and enjoy state-of-the-art performance, but require training data and exhibit limited generalisability.

From current research, we identify two gaps in internal state approaches. First, reliance on task and model-specific heuristics for deriving scores reduces transferability across settings. Second, compression of evidence to last or mean-token summaries ignores important sequence-level patterns.
In this work, we tackle hallucination detection using internal LLM signals with an emphasis on cross-layer dynamics and ask: \emph{Can uncertainty be inferred reliably from hidden states over the full token sequence with minimal assumptions to improve cross-task generalisability?} We introduce \textit{\textbf{Sequential Internal Variance Representation (SIVR)}}, a lightweight, supervised uncertainty estimation framework that
\begin{enumerate*}[(i)]
    \item computes token-wise, layer-wise \textbf{\textit{internal variance}} from hidden states, and
    \item dynamically  learns to aggregate the full sequence to estimate uncertainty.
\end{enumerate*}
SIVR operates on the simple, general premise that uncertainty is reflected in the degree of dispersion of internal representations across layers. To this end, we construct per-token internal variance that captures this dispersion, and learn patterns over the sequence to predict factual inaccuracy. Detailed formulation is presented in \Cref{sec:method}. SIVR is efficient to compute, and retains sequence dynamics that are critical for detecting factual errors.
In summary, our contributions are as follows:

\begin{itemize}[leftmargin=*]
    \item We demonstrate the shortcomings of current works that use hidden states as a proxy signal for uncertainty.
    Internal variance, a novel and more robust feature that tracks the dispersion of hidden states across layers, is proposed.
    \item We introduce a pipeline that assesses all tokens, to fully take advantage of patterns suggestive of factual inaccuracy or hallucination.
    \item Our extensive experiments show that SIVR consistently outperforms strong baselines, and the incorporation of the proposed feature significantly enhances OOD generalisation.
\end{itemize}

\section{Methodology}
\label{sec:method}

In this section, we present our proposed uncertainty estimation process in detail. First, we scrutinise a related method, Chain-of-Embeddings (CoE) \citep{wang2024latent}, that produces an uncertainty score, and show its limitations.
Then, we propose a more general feature and incorporate it into a unified framework that ascertains the uncertainty level in the model response.

\subsection{Current Limitations}

\begin{figure}
    \centering
    \includegraphics[width=\linewidth]{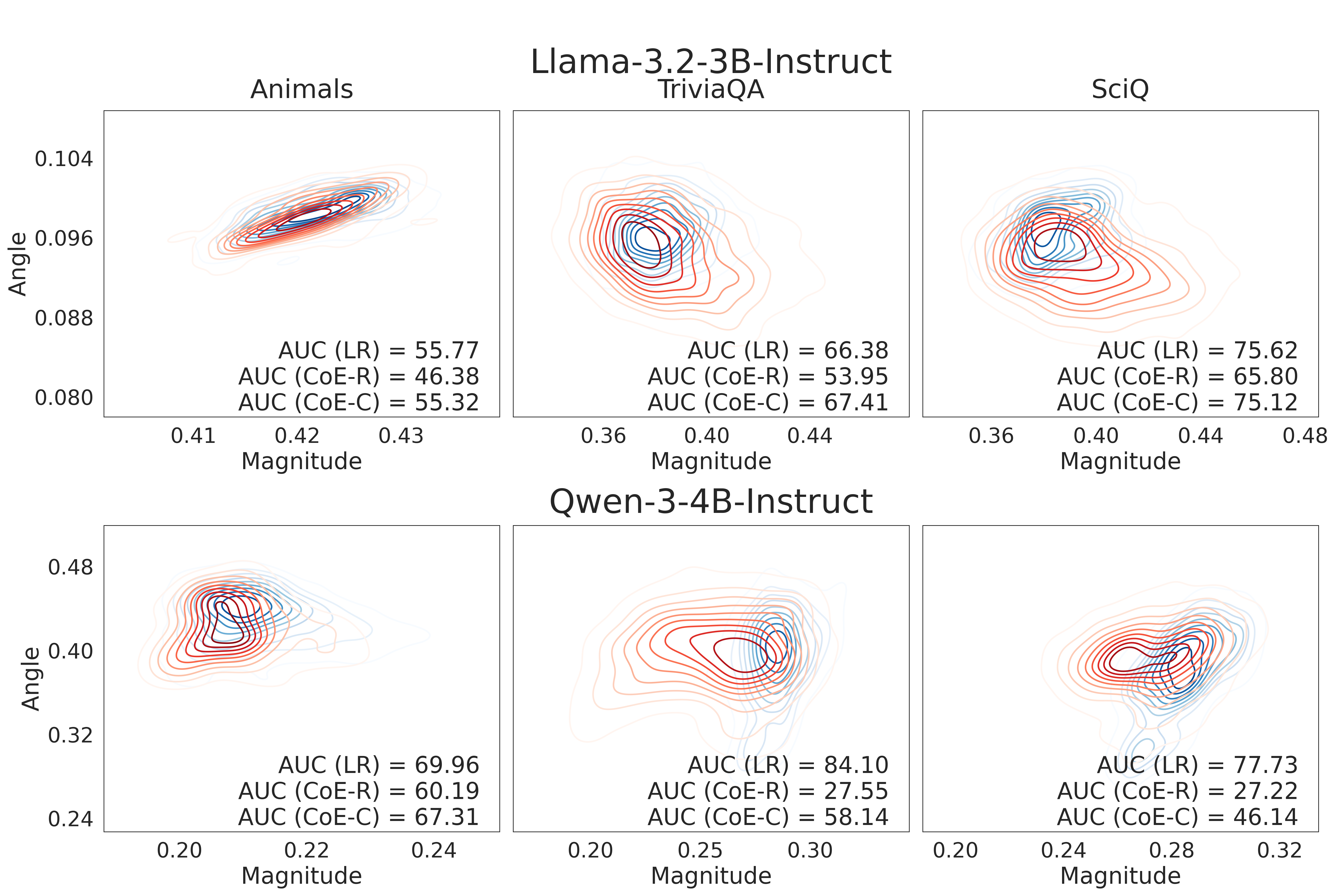}
    \caption{Visualisation of CoE features of \textcolor{blue}{correct} and \textcolor{red}{wrong} answers. Significant overlap indicates CoE features provide poor discriminability.}
    \label{fig:coe_baseline}
\end{figure}

Chain-of-Embeddings (CoE) \citep{wang2024latent} primarily inspects the magnitude and angle of the difference between successive layers of the hidden states. The authors found that answer correctness correlates positively with the magnitude, and negatively with angle, and combined the two features into an uncertainty score based on this observation. This places a strict constraint on the behavior of the hidden states that may not align across models and tasks. In \Cref{fig:coe_baseline}, the CoE features, Magnitude and Angle, of samples from other datasets are reproduced. Between the two models, the pattern of correct statements relative to the wrong ones varies, and there is little distinction between the two classes, resulting in poor discriminability.

Nonetheless, we argue that the hidden states across different layers remain a useful indicator of factual accuracy. For example, \Cref{fig:coe_case} visualises two pairs of generated responses via PCA, where each pair has identical CoE-C scores, but the correct generations show distinctive hidden states features from the wrong ones. This showcases the situation where the score collapses as only the average ``step-size'' between layers is considered even though the latent paths are clearly different.
Furthermore, in \Cref{fig:coe_baseline}, we feed the features to a logistic regression model, and by providing the correctness as labels, a consistently higher AUC is achievable. This confirms that the hidden states, particularly their behavior between layers, are useful features for discerning factual accuracy. Thus, we seek to define a more general and robust feature to encapsulate output uncertainty.

\begin{figure}
    \centering
    \includegraphics[width=\linewidth]{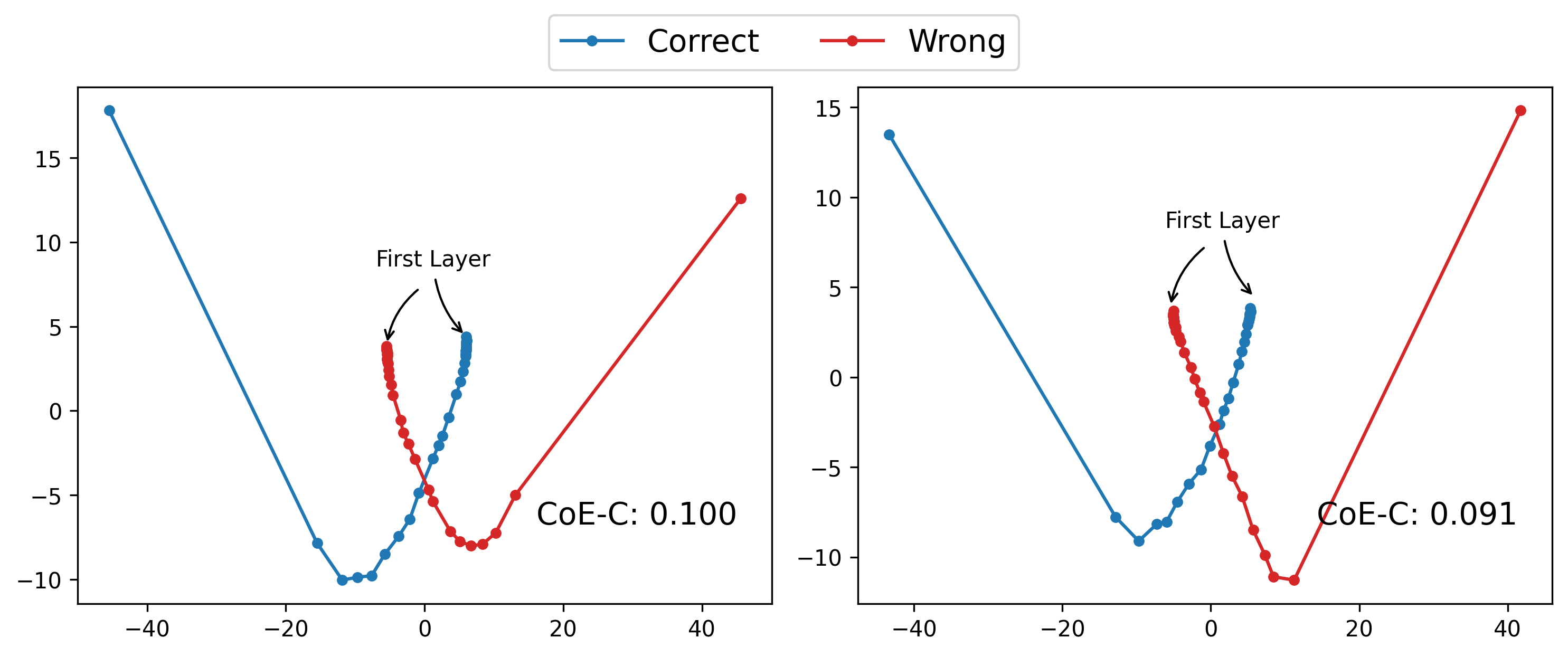}
    \caption{Visualisation (PCA-compressed) of two pairs of embeddings across layers. Each pair has identical CoE-C scores, but distinctive hidden states pattern, showing how CoE-C can degenerate even when the hidden states do not.}
    \label{fig:coe_case}
\end{figure}

\subsection{Internal Variance}

To overcome the above limitations, we aim to formulate an uncertainty score based on a looser, more general assumption: \textit{uncertainty is reflected in the spread of the hidden states across layers}.%

As such, we aim to summarise \emph{cross-layer} variability with compact statistics that capture complementary facets of dispersion and then \emph{learn} patterns over the token sequence.
To characterise latent space geometry, we distinguish between variation in magnitude and direction. Relying on a single metric obscures different degenerate modes. We therefore employ generalised variance to measure volumetric capacity and circular variance for directional diversity.
Appendix~\ref{subsec:gv_cv} shows that generalised variance is bounded by radial and directional components, demonstrating that jointly monitoring both is necessary to isolate distinct failure modes. We complement this geometric analysis by tracking output entropy, linking internal spatial characteristics to the predictive output distribution.

\paragraph{Generalised variance:}
First, we consider the determinant of the covariance matrix or generalised variance of hidden states for all layers.
For the $t$-th output token $s_t$ in a sequence $\bm{s}$ of total length $T$, denote the hidden states or embeddings at the $l$-th layer as $\bm{h}_{t}^{l} \in \mathbb{R}^{d}$, with sample covariance $\Sigma$ where $l \in \{0, \ldots, L\}$ and $d$ is the hidden states dimension.
Since the hidden states are high-dimensional, for more stable covariance estimation, we compute the logarithm of the pseudo-determinant of the regularised covariance $\Sigma' =\Sigma+\alpha I_d$ for some small $\alpha>0$. We approximate the generalised variance as

\begin{equation}
    v_t =\log \det(\Sigma') = \sum_i \log \lambda_i,
\end{equation}

\noindent where $\lambda_i$ are eigenvalues of $\Sigma'$. We use log-determinant $\log\det(\Sigma')$ as a compact summary of multidimensional spread. It aggregates the entire eigen-spectrum and is directly tied to differential entropy in the Gaussian case, an extension of Shannon entropy to continuous space \citep{zhouyin2023understanding}. The log-determinant is computed in a numerically stable and low-cost manner (see Appendix~\ref{sec:compute}).

\paragraph{Circular variance:} Next, we adopt spherical or circular variance on normalised layer vectors $\hat{\bm{h}}_{t}^{l}$, which represents variation in different directions \citep{mardia2009directional}. This provides a complementary and scale-robust view of dispersion. We define circular variance $c_t$ for token  $s_t$ as

\begin{equation}
    c_t = 1 - \left\| \frac{1}{L+1} \sum_{l=0}^{L} \hat{\bm{h}}_t^{l} \right\| .
\end{equation}

Notably, circular variance also contains information on all pairwise relationships of hidden states at different layers (see Appendix~\ref{subsec:circ_var}), instead of being restricted to relationships between successive layers as with CoE. To understand the effectiveness of these new metrics, we compare with CoE-R and CoE-C. In \Cref{fig:internal_variance}, similar to CoE which gives a sequence score, circular variance (CirCVar) and covariance determinant (CovDet) is computed with the mean hidden states $\bar{\bm{h}}^l =\frac{1}{T}\sum_{t}^T \bm{h}_{t}^l$. CovDet and CircVar, on average, yields better AUC (Area Under the receiver operating Curve) than CoE-R and CoE-C. CoE-C generally is quite strong, but in some cases, such as with Qwen on SciQ, it severely underperforms. Meanwhile, the performance of CovDet and CircVar remains robust, indicating their broad applicability.

\begin{figure}
    \centering
    \includegraphics[width=\linewidth]{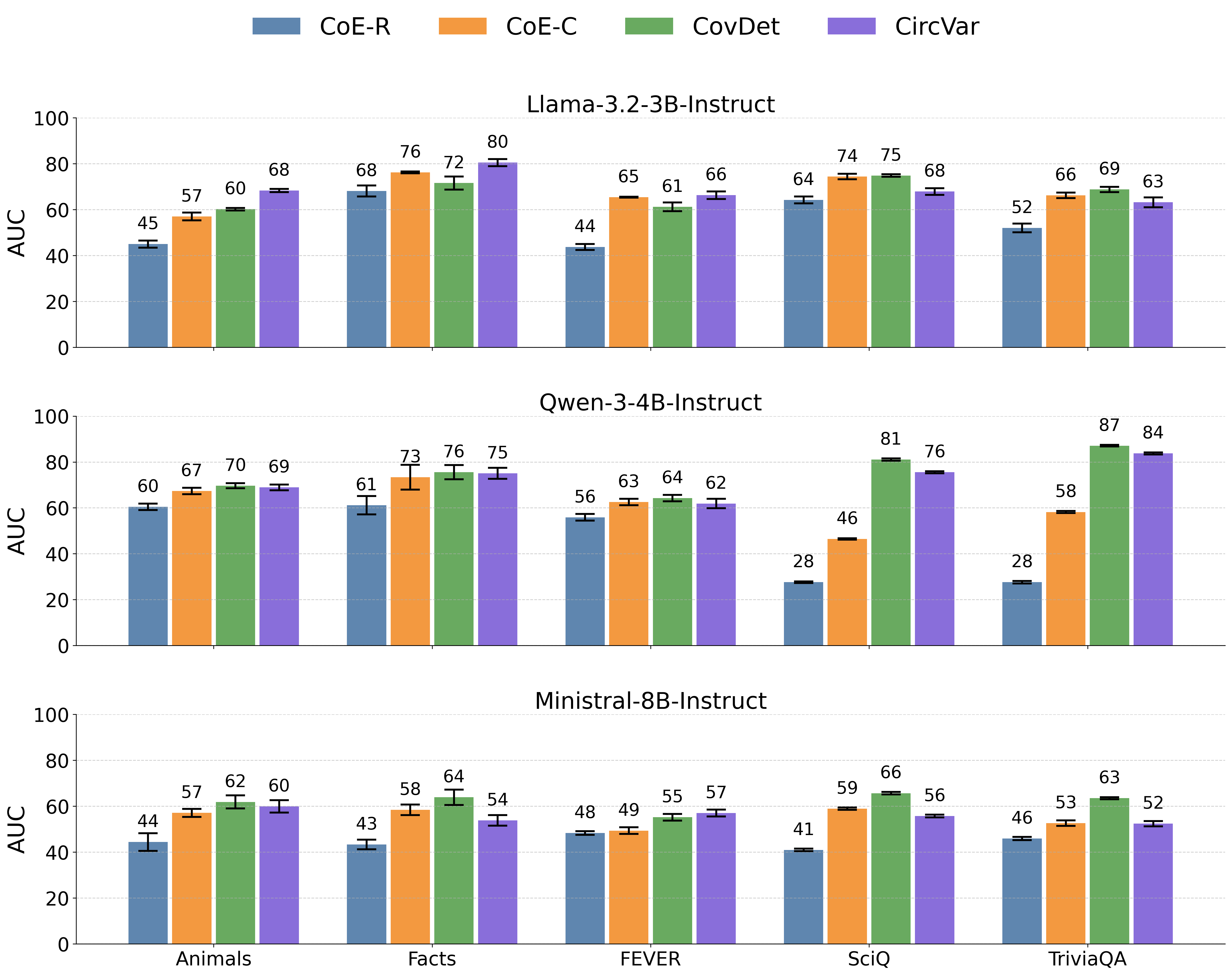}
    \caption{Performance of proposed features compared to CoE.}
    \label{fig:internal_variance}
\end{figure}

\paragraph{Token entropy:}
Finally, we include predictive entropy to quantify uncertainty in the probability distribution emitted by the decoder, which also encodes relevant information on output uncertainty.
We define per-token entropy

\begin{equation}
    e_t = H\left(p(s_t)\right) = - \sum_{s_t \in \mathcal{V}} p (s_t) \log p(s_t)
\end{equation}
\noindent where $p(s_t)$ is the softmax probability of token $s_t \in \mathcal{V}$, and $\mathcal{V}$ is the model vocabulary set.
Collectively, we formulate the \textit{\textbf{internal variance}} at each token as $\bm{v}_t = [v_t, c_t, e_t]^\top$.

Together, these three signals --- generalised variance for magnitude, circular variance for directionality, and predictive entropy for output-space uncertainty --- offer a holistic characterisation of uncertainty that tracks how internal dispersion evolves across layers and how it manifests in the output distribution. Our ablation studies in \Cref{tab:ablation} show that each component contributes and the combination yields the strongest performance.

\subsection{Token Aggregation}

We define a per-token dispersion feature $\bm{v}_t$ and seek a sequence-level uncertainty score. Relying on a single token can miss informative temporal structure. For example (\Cref{fig:reprDiv}), in \texttt{Praia is a city in Portugal, not a country}, the generalised variance is initially high then stabilises, but a sharp spike at \texttt{Portugal} flags the error, a pattern obscured by last or mean-token summaries.
Therefore, we propose to learn from the full token-wise sequence of dispersion features.
We choose to focus on supervised uncertainty learning as this paradigm produces uncertainty scores that are aligned explicitly with a notion of correctness \citep{liu2024uncertainty, srey2026towards}.
Unlike UHead that mean-pool token encodings \citep{shelmanov2025head} for a claim span, we retain order to capture patterns indicative of factual correctness.

Formally, we have responses of variable length $\mathcal{D}=\{\bm{s}_i,y_i\}_{i=1}^{n}$ where $\bm{s}_i\in \mathbb{R}^{T_i\times d}$ and $y_i\in \{0,1\}$ with 1 for hallucinated instances.
From each sequence $\bm{s}_i$, informative training features $\bm{x}_i$, with core components comprising the proposed internal variance $\bm{v}_t$, are extracted. 
The goal is to learn learn $f_\theta : \mathbb{R}^{T\times d_{\text{tr}}} \rightarrow [0,1]$ to estimate $p_\theta (y=1|\bm{x})$, which can be regarded as the uncertainty score for the sequence $\bm{s}$. $d_{\text{tr}}$ is the dimension of the training features. The objective function is the binary cross entropy loss with $l_2$ regularisation

\begin{equation}
    \frac{1}{n}\sum_{i=1}^n -y_i\log\hat{y_i} -(1-y_i)\log{(1-\hat{y_i})} + \beta \|\theta\|_2^2
\end{equation}

We employ a lightweight sequence head with an embedding layer to project input to a hidden dimension of 128, followed by a transformer encoder block, and finally, a linear classifier layer to produce the sequence score. More details on architecture and hyperparameters are provided in Appendix \ref{sec:expt_detail}.

\begin{table*}[t!] 
\renewcommand{\arraystretch}{1.5}
\centering
\setlength{\tabcolsep}{5pt}

\begin{subtable}{\textwidth}
\resizebox{\textwidth}{!}{

}
\caption{Llama-3.2-3B-Instruct}
\end{subtable}

\begin{subtable}{\textwidth}
\resizebox{\textwidth}{!}{
%
}
\caption{Llama-3.1-8B-Instruct}
\end{subtable}

\begin{subtable}{\textwidth}
\resizebox{\textwidth}{!}{
%
}
\caption{Ministral-8B-Instruct}
\end{subtable}

\caption{Main results. Cooler color indicates better results, with the best bolded and second best underlined.}
\label{tab:main_result}
\end{table*}

\section{Experiments}

\subsection{Setup}

\paragraph{Datasets.} We evaluate extensively on twelve datasets for two general tasks, fact-checking and question answering (QA). In fact-checking, a statement is provided for an LLM to verify its factuality. The setting is identical to TruthEval \citep{khatun2024trutheval}, and the anchor experiments by \citet{jin2024exploring}. For this task, we select statements from a broad range of topics from the datasets Counterfact \citep{meng2022locating}, Common Claims \citep{casper2023explore}, Animals and Facts \citep{azaria2023internal}, and FEVER \citep{thorne2018fever}.
For clearer presentation, we report performance for fact-checking tasks in Appendix~\ref{subsec:topic}.
In the ablation, we refer to the collection of the first four datasets as ``True-False''.
For QA, we evaluate both free-form, and multiple-choice answers, across multiple important domains: TriviaQA \citep{joshi2017triviaqa}, SciQ \citep{welbl2017crowdsourcing} for general knowledge; MedMCQA \citep{pal2022medmcqa}, MMLU \citep{hendrycks2020measuring} for more specialised medical and scientific knowledge; MGSM \citep{shi2022language}, MATH \citep{hendrycks2021measuring}, CommonsenseQA \citep{talmor2018commonsenseqa} for mathematical and reasoning tasks. Queries from each dataset are provided as prompts for an LLM, and the responses are obtained for evaluation. Corresponding binary correctness labels are extracted via exact matching with reference answers \textit{e.g.} MedMCQA and MMLU, or with ROUGE-L \citep{lin2004rouge} \textit{e.g.} TriviaQA and SciQ. More details are found in Appendix~\ref{sec:expt_detail}.

\paragraph{Baselines.} We compare with four classes of uncertainty estimation baselines:
(1) \textbf{Logit-based}: Maximum Sequence Probability (MSP or Max Prob), Entropy \citep{huang2023look}, Perplexity \citep{si2022prompting}, Temperature-Scaled MSP \citep{shih2023long}, and Energy \citep{liu2020energy}.
(2) \textbf{Sampling-based}: Semantic Entropy (SE) \citep{kuhn2023semantic}, SAR \citep{duan-etal-2024-shifting}.
(3) \textbf{Confidence elicitation}: P(True) \citep{kadavath2022language}, which prompts the model to determine if the answer is correct, and the probability of the token ``True'' is extracted as a confidence score.
(4) \textbf{Internal states:} CoE \citep{wang2024latent}; and four supervised baselines:
SAPLMA \citep{azaria2023internal}: trained on last-token hidden states;
SATMD + MSP \citep{vazhentsev2025token}: trained on Mahalanobis distances of hidden states at each layer, averaged across tokens, and concatenated with MSP;
Lookback Lens \citep{chuang-etal-2024-lookback}: trained on lookback ratios, the proportion of attention on context to attention on both generated and context tokens; and
TAD: a two-stage process that trains a probe on attention weights, token probabilities, and intermediate scores from the first stage \citep{vazhentsev-etal-2025-unconditional}.

\paragraph{Language Models.}
For the main experiments, we evaluate with Llama-3.2-3B, Llama-3.1-8B \citep{grattafiori2024llama}, and Ministral-8B \citep{Mistral8BInstruct}.
In Appendix~\ref{subsec:qwen_result}, we provide supplementary results with Qwen-3-4B and Qwen-3-14B \citep{yang2025qwen3}.
We work with the instruction-tuned versions, except for Qwen-3-14B.
For reproducibility, we generate all responses via greedy decoding.

\paragraph{Metrics.} We employ AUC, FPR@95, and AUPR \citep{davis2006relationship} as evaluation metrics to comprehensively assess the quality of the uncertainty scores for discerning correct from wrong generations. AUC (Area Under the receiver operating Curve) is a threshold-invariant metric that measures the classifier's ability to distinguish between positive and negative classes. A higher AUC indicates stronger discriminative power. FPR@95 quantifies the proportion of negatives incorrectly classified as positives when a true positive rate or recall of 95\% is achieved, a particularly informative metric for safety-critical contexts with stringent recall requirements. A low FPR@95 is better.
Finally, AUPR (Area Under the Precision-Recall curve) is a threshold-invariant metric that is concerned with the trade-off between precision and recall. A higher AUPR reflects the classifier's ability to maintain precision while retrieving positives effectively.

\subsection{Results and Analysis}
\label{subsec:results}

\begin{table*}[t!] 
\renewcommand{\arraystretch}{1.35}
\centering
\setlength{\tabcolsep}{1pt}

\resizebox{\textwidth}{!}{
\begin{tabular}{l||ccc|ccc|ccc|ccc|ccc|ccc|ccc|ccc|ccc|ccc}
\toprule
\multirow{2}{*}{\textbf{Variant}} & \multicolumn{3}{c|}{\textbf{True-False}} & \multicolumn{3}{c|}{\textbf{Fever}} & \multicolumn{3}{c|}{\textbf{TriviaQA}}& \multicolumn{3}{c|}{\textbf{SciQ}} & \multicolumn{3}{c|}{\textbf{MedMCQA}} & \multicolumn{3}{c|}{\textbf{GSM}} & \multicolumn{3}{c|}{\textbf{Math}} & \multicolumn{3}{c|}{\textbf{MMLU}} &  \multicolumn{3}{c|}{\textbf{CommonsenseQA}} & \multicolumn{3}{c}{\textbf{Average}} \\ \cline{2-31}
    & \textbf{AUC} & \textbf{F95} & \textbf{AUP} & \textbf{AUC} & \textbf{F95} & \textbf{AUP} & \textbf{AUC} & \textbf{F95} & \textbf{AUP} & \textbf{AUC} & \textbf{F95} & \textbf{AUP} & \textbf{AUC} & \textbf{F95} & \textbf{AUP} & \textbf{AUC} & \textbf{F95} & \textbf{AUP} & \textbf{AUC} & \textbf{F95} & \textbf{AUP} & \textbf{AUC} & \textbf{F95} & \textbf{AUP} & \textbf{AUC} & \textbf{F95} & \textbf{AUP} & \textbf{AUC} & \textbf{F95} & \textbf{AUP} \\
\midrule
HS (PCA) & \large\cellcolor[rgb]{0.98662,0.88466,0.82468} 79.33 & \large\cellcolor[rgb]{0.97416,0.75087,0.64245} 72.22 & \large\cellcolor[rgb]{0.90142,0.93679,0.95625} 53.13 & \large\cellcolor[rgb]{0.78085,0.87835,0.9301} 80.78 & \large\cellcolor[rgb]{0.69366,0.83406,0.90519} 49.69 & \large\cellcolor[rgb]{0.84883,0.91188,0.94656} 50.83 & \large\cellcolor[rgb]{0.80992,0.89312,0.93841} 84.65 & \large\cellcolor[rgb]{0.98016,0.9148,0.87636} 61.54 & \large\cellcolor[rgb]{0.81961,0.89804,0.94118} 84.23 & \large\cellcolor[rgb]{0.99077,0.85052,0.76978} 76.31 & \large\cellcolor[rgb]{0.97416,0.75087,0.64245} 76.74 & \large\cellcolor[rgb]{0.97001,0.96217,0.95755} 81.93 & \large\cellcolor[rgb]{0.97278,0.94925,0.93541} 59.66 & \large\cellcolor[rgb]{0.84298,0.90911,0.94548} 83.96 & \large\cellcolor[rgb]{0.70334,0.83899,0.90796} 62.94 & \large\cellcolor[rgb]{0.86636,0.92018,0.94979} 80.84 & \large\cellcolor[rgb]{0.86636,0.92018,0.94979} 54.20 & \large\cellcolor[rgb]{0.8722,0.92295,0.95087} 67.80 & \large\cellcolor[rgb]{0.90142,0.93679,0.95625} 77.23 & \large\cellcolor[rgb]{0.93064,0.95063,0.96163} 68.04 & \large\cellcolor[rgb]{0.84883,0.91188,0.94656} 78.54 & \large\cellcolor[rgb]{0.97093,0.95786,0.95017} 56.33 & \large\cellcolor[rgb]{0.69366,0.83406,0.90519} 85.37 & \large\cellcolor[rgb]{0.9248,0.94787,0.96055} 77.65 & \large\cellcolor[rgb]{0.97186,0.95356,0.94279} 63.08 & \large\cellcolor[rgb]{0.69366,0.83406,0.90519} 83.54 & \large\cellcolor[rgb]{0.99077,0.85052,0.76978} 45.43 & \large\cellcolor[rgb]{0.96909,0.96647,0.96494} 74.68 & \large\cellcolor[rgb]{0.98662,0.82561,0.73795} 71.00 & \large\cellcolor[rgb]{0.90727,0.93956,0.95732} 63.49 \\ 
HS & \large\cellcolor[rgb]{0.87805,0.92572,0.95194} 82.98 & \large\cellcolor[rgb]{0.76148,0.86851,0.92457} 47.48 & \large\cellcolor[rgb]{0.91311,0.94233,0.9584} 52.86 & \large\cellcolor[rgb]{0.86636,0.92018,0.94979} 78.54 & \large\cellcolor[rgb]{0.83714,0.90634,0.94441} 55.35 & \large\cellcolor[rgb]{0.9248,0.94787,0.96055} 47.43 & \large\cellcolor[rgb]{0.88389,0.92849,0.95302} 81.77 & \large\cellcolor[rgb]{0.98201,0.90619,0.86159} 62.50 & \large\cellcolor[rgb]{0.93649,0.9534,0.96271} 79.73 & \large\cellcolor[rgb]{0.97416,0.75087,0.64245} 75.29 & \large\cellcolor[rgb]{0.97463,0.94064,0.92065} 68.60 & \large\cellcolor[rgb]{0.97416,0.75087,0.64245} 77.98 & \large\cellcolor[rgb]{0.97416,0.75087,0.64245} 55.89 & \large\cellcolor[rgb]{0.97647,0.93203,0.90588} 90.57 & \large\cellcolor[rgb]{0.97416,0.75087,0.64245} 53.13 & \large\cellcolor[rgb]{0.80992,0.89312,0.93841} 82.49 & \large\cellcolor[rgb]{0.80023,0.8882,0.93564} 50.38 & \large\cellcolor[rgb]{0.84883,0.91188,0.94656} 68.71 & \large\cellcolor[rgb]{0.69366,0.83406,0.90519} 80.01 & \large\cellcolor[rgb]{0.69366,0.83406,0.90519} 62.89 & \large\cellcolor[rgb]{0.80992,0.89312,0.93841} 79.21 & \large\cellcolor[rgb]{0.97416,0.75087,0.64245} 55.00 & \large\cellcolor[rgb]{0.99031,0.86744,0.79516} 90.24 & \large\cellcolor[rgb]{0.97416,0.75087,0.64245} 74.40 & \large\cellcolor[rgb]{0.87805,0.92572,0.95194} 64.58 & \large\cellcolor[rgb]{0.97416,0.75087,0.64245} 95.12 & \large\cellcolor[rgb]{0.69366,0.83406,0.90519} 47.22 & \large\cellcolor[rgb]{0.91895,0.9451,0.95948} 75.46 & \large\cellcolor[rgb]{0.86052,0.91742,0.94871} 63.80 & \large\cellcolor[rgb]{0.97739,0.92772,0.8985} 61.60 \\ 
CovDet, AngVar & \large\cellcolor[rgb]{0.97416,0.75087,0.64245} 77.38 & \large\cellcolor[rgb]{0.97278,0.94925,0.93541} 60.36 & \large\cellcolor[rgb]{0.88973,0.93126,0.95409} 53.43 & \large\cellcolor[rgb]{0.97001,0.96217,0.95755} 75.04 & \large\cellcolor[rgb]{0.97832,0.92341,0.89112} 66.67 & \large\cellcolor[rgb]{0.99031,0.86744,0.79516} 38.91 & \large\cellcolor[rgb]{0.97416,0.75087,0.64245} 69.85 & \large\cellcolor[rgb]{0.97416,0.75087,0.64245} 73.08 & \large\cellcolor[rgb]{0.97416,0.75087,0.64245} 70.00 & \large\cellcolor[rgb]{0.99031,0.86744,0.79516} 76.58 & \large\cellcolor[rgb]{0.84883,0.91188,0.94656} 61.63 & \large\cellcolor[rgb]{0.99031,0.86744,0.79516} 79.54 & \large\cellcolor[rgb]{0.79054,0.88328,0.93287} 63.41 & \large\cellcolor[rgb]{0.97416,0.75087,0.64245} 97.17 & \large\cellcolor[rgb]{0.85467,0.91465,0.94764} 60.67 & \large\cellcolor[rgb]{0.97416,0.75087,0.64245} 70.28 & \large\cellcolor[rgb]{0.97416,0.75087,0.64245} 77.10 & \large\cellcolor[rgb]{0.97416,0.75087,0.64245} 54.48 & \large\cellcolor[rgb]{0.98524,0.8173,0.72734} 72.85 & \large\cellcolor[rgb]{0.97416,0.75087,0.64245} 75.26 & \large\cellcolor[rgb]{0.97416,0.75087,0.64245} 71.69 & \large\cellcolor[rgb]{0.69366,0.83406,0.90519} 57.88 & \large\cellcolor[rgb]{0.97555,0.93633,0.91326} 89.02 & \large\cellcolor[rgb]{0.69366,0.83406,0.90519} 79.78 & \large\cellcolor[rgb]{0.97416,0.75087,0.64245} 60.32 & \large\cellcolor[rgb]{0.83714,0.90634,0.94441} 85.98 & \large\cellcolor[rgb]{0.97416,0.75087,0.64245} 45.09 & \large\cellcolor[rgb]{0.97416,0.75087,0.64245} 71.31 & \large\cellcolor[rgb]{0.97416,0.75087,0.64245} 72.28 & \large\cellcolor[rgb]{0.98385,0.809,0.71672} 59.49 \\ 
Internal Variance & \large\cellcolor[rgb]{0.8722,0.92295,0.95087} 83.08 & \large\cellcolor[rgb]{0.94817,0.95894,0.96486} 57.45 & \large\cellcolor[rgb]{0.70334,0.83899,0.90796} 57.34 & \large\cellcolor[rgb]{0.97463,0.94064,0.92065} 73.98 & \large\cellcolor[rgb]{0.97416,0.75087,0.64245} 76.10 & \large\cellcolor[rgb]{0.98108,0.9105,0.86897} 41.60 & \large\cellcolor[rgb]{0.8313,0.90358,0.94333} 83.76 & \large\cellcolor[rgb]{0.69366,0.83406,0.90519} 38.46 & \large\cellcolor[rgb]{0.9248,0.94787,0.96055} 80.08 & \large\cellcolor[rgb]{0.69366,0.83406,0.90519} 81.92 & \large\cellcolor[rgb]{0.69366,0.83406,0.90519} 56.98 & \large\cellcolor[rgb]{0.77116,0.87343,0.92734} 85.32 & \large\cellcolor[rgb]{0.69366,0.83406,0.90519} 64.53 & \large\cellcolor[rgb]{0.9797,0.78408,0.68489} 96.23 & \large\cellcolor[rgb]{0.69366,0.83406,0.90519} 63.10 & \large\cellcolor[rgb]{0.86052,0.91742,0.94871} 80.97 & \large\cellcolor[rgb]{0.90727,0.93956,0.95732} 57.25 & \large\cellcolor[rgb]{0.88389,0.92849,0.95302} 67.21 & \large\cellcolor[rgb]{0.84298,0.90911,0.94548} 78.25 & \large\cellcolor[rgb]{0.98662,0.88466,0.82468} 72.16 & \large\cellcolor[rgb]{0.69366,0.83406,0.90519} 80.57 & \large\cellcolor[rgb]{0.98662,0.88466,0.82468} 55.72 & \large\cellcolor[rgb]{0.97555,0.93633,0.91326} 89.02 & \large\cellcolor[rgb]{0.98754,0.88035,0.8173} 75.70 & \large\cellcolor[rgb]{0.91895,0.9451,0.95948} 64.05 & \large\cellcolor[rgb]{0.88389,0.92849,0.95302} 87.20 & \large\cellcolor[rgb]{0.97186,0.95356,0.94279} 46.06 & \large\cellcolor[rgb]{0.86636,0.92018,0.94979} 76.29 & \large\cellcolor[rgb]{0.97093,0.95786,0.95017} 66.93 & \large\cellcolor[rgb]{0.8722,0.92295,0.95087} 64.08 \\ 
Full (RNN) & \large\cellcolor[rgb]{0.98939,0.84221,0.75917} 78.56 & \large\cellcolor[rgb]{0.97555,0.93633,0.91326} 61.46 & \large\cellcolor[rgb]{0.97416,0.75087,0.64245} 44.97 & \large\cellcolor[rgb]{0.97416,0.75087,0.64245} 68.02 & \large\cellcolor[rgb]{0.99216,0.85882,0.78039} 71.70 & \large\cellcolor[rgb]{0.97416,0.75087,0.64245} 34.93 & \large\cellcolor[rgb]{0.77116,0.87343,0.92734} 85.45 & \large\cellcolor[rgb]{0.89558,0.93403,0.95517} 50.00 & \large\cellcolor[rgb]{0.80023,0.8882,0.93564} 84.71 & \large\cellcolor[rgb]{0.76148,0.86851,0.92457} 81.35 & \large\cellcolor[rgb]{0.98754,0.88035,0.8173} 72.09 & \large\cellcolor[rgb]{0.69366,0.83406,0.90519} 86.19 & \large\cellcolor[rgb]{0.97463,0.94064,0.92065} 59.48 & \large\cellcolor[rgb]{0.69366,0.83406,0.90519} 80.19 & \large\cellcolor[rgb]{0.98201,0.90619,0.86159} 56.29 & \large\cellcolor[rgb]{0.75179,0.86359,0.9218} 83.65 & \large\cellcolor[rgb]{0.69366,0.83406,0.90519} 45.80 & \large\cellcolor[rgb]{0.82545,0.90081,0.94225} 69.79 & \large\cellcolor[rgb]{0.97416,0.75087,0.64245} 72.07 & \large\cellcolor[rgb]{0.98662,0.88466,0.82468} 72.16 & \large\cellcolor[rgb]{0.97416,0.75087,0.64245} 71.70 & \large\cellcolor[rgb]{0.85467,0.91465,0.94764} 57.15 & \large\cellcolor[rgb]{0.97555,0.93633,0.91326} 89.02 & \large\cellcolor[rgb]{0.94233,0.95617,0.96378} 77.37 & \large\cellcolor[rgb]{0.90727,0.93956,0.95732} 64.18 & \large\cellcolor[rgb]{0.86052,0.91742,0.94871} 86.59 & \large\cellcolor[rgb]{0.97693,0.76747,0.66367} 45.14 & \large\cellcolor[rgb]{0.97832,0.92341,0.89112} 73.80 & \large\cellcolor[rgb]{0.97647,0.93203,0.90588} 67.78 & \large\cellcolor[rgb]{0.97416,0.75087,0.64245} 58.83 \\ 
SIVR & \large\cellcolor[rgb]{0.69366,0.83406,0.90519} 85.37 & \large\cellcolor[rgb]{0.69366,0.83406,0.90519} 45.06 & \large\cellcolor[rgb]{0.69366,0.83406,0.90519} 57.54 & \large\cellcolor[rgb]{0.69366,0.83406,0.90519} 82.44 & \large\cellcolor[rgb]{0.81961,0.89804,0.94118} 54.09 & \large\cellcolor[rgb]{0.69366,0.83406,0.90519} 55.71 & \large\cellcolor[rgb]{0.69366,0.83406,0.90519} 87.29 & \large\cellcolor[rgb]{0.73241,0.85375,0.91626} 40.38 & \large\cellcolor[rgb]{0.69366,0.83406,0.90519} 87.07 & \large\cellcolor[rgb]{0.81961,0.89804,0.94118} 80.81 & \large\cellcolor[rgb]{0.90142,0.93679,0.95625} 63.95 & \large\cellcolor[rgb]{0.70334,0.83899,0.90796} 86.03 & \large\cellcolor[rgb]{0.9248,0.94787,0.96055} 61.12 & \large\cellcolor[rgb]{0.78085,0.87835,0.9301} 82.08 & \large\cellcolor[rgb]{0.84883,0.91188,0.94656} 60.76 & \large\cellcolor[rgb]{0.69366,0.83406,0.90519} 84.70 & \large\cellcolor[rgb]{0.93064,0.95063,0.96163} 58.78 & \large\cellcolor[rgb]{0.69366,0.83406,0.90519} 73.12 & \large\cellcolor[rgb]{0.88389,0.92849,0.95302} 77.56 & \large\cellcolor[rgb]{0.98662,0.88466,0.82468} 72.16 & \large\cellcolor[rgb]{0.91311,0.94233,0.9584} 77.19 & \large\cellcolor[rgb]{0.99123,0.86313,0.78777} 55.54 & \large\cellcolor[rgb]{0.97416,0.75087,0.64245} 91.46 & \large\cellcolor[rgb]{0.91895,0.9451,0.95948} 77.69 & \large\cellcolor[rgb]{0.69366,0.83406,0.90519} 66.39 & \large\cellcolor[rgb]{0.86052,0.91742,0.94871} 86.59 & \large\cellcolor[rgb]{0.91895,0.9451,0.95948} 46.40 & \large\cellcolor[rgb]{0.69366,0.83406,0.90519} 78.11 & \large\cellcolor[rgb]{0.69366,0.83406,0.90519} 60.81 & \large\cellcolor[rgb]{0.69366,0.83406,0.90519} 66.18 \\
\bottomrule
\end{tabular}
}

\caption{Ablation with Ministral-8B-Instruct}
\label{tab:ablation}

\end{table*}

\paragraph{Main Results.} \Cref{tab:main_result} presents the main results. First, our approach consistently achieves better results compared to its competitors. On average, across all datasets and models, the strongest baseline, TAD, obtains 76.44\%, 64.74\%, and 65.01\% on the three metrics. In comparison, SIVR achieves 79.50\%, 57.21\%, and 69.20\%, amounting to an improvement of 3.06\%, 7.53\%, and 4.19\% on AUC, FPR@95, and AUPR, respectively.
Similar to previous findings \citep{fadeeva2023lmpolygraph, vashurin2025benchmarking}, simple methods such as Max Prob are relatively effective and stable.
SAR and SE slightly outperform these methods, but are not consistent. This may be attributed to the difficulty sampling-based approaches face in detecting self-consistent errors \citep{tan-etal-2025-consistent}.
Likewise, most other methods lack stability.
P(True) generally gives weak results, but on certain tasks, such as Common Claim, the performance is fair. This may be due to sensitivity of input prompts and the token selected for confidence elicitation. 
In some cases, CoE is competitive even with supervised baselines, such as on SciQ and MedMCQA (Llama-3.2-3B-Instruct), but performance varies significantly across tasks.
This reinforces the observation that CoE could provide SOTA results, but typically underperforms because their assumption does not hold in general. 
In line with prior works, supervised methods are the most optimal and stable. They are comparable on most tasks, but on the whole, SAPLMA surpasses both SATMD and Lookback Lens, suggesting that hidden states themselves are more informative than Mahalanobis distances and lookback ratios. Meanwhile, our approach additionally inspects internal variance features sequentially, enabling us to more accurately determine the likelihood of incorrect or hallucinated generations. We further show that this inclusion of internal variance is crucial to maintain performance in out-of-distribution settings.

\paragraph{Ablation Study.} To understand the effectiveness of the components in SIVR, we conduct an ablation. Two settings are considered: one only using the hidden states, and one only the internal variance. For the first setting, we test with hidden states, and with top-10 principal components (PC) to reduce overfitting. We conduct a more extensive ablation on the number of PCs in Appendix~\ref{subsec:pc_ablation}. Then, we test only the proposed CovDet and CircVar, as well as the whole internal variance, which includes token entropy. Finally, we employ a recurrent neural network (RNN) as the architecture. The results are provided in Table \ref{tab:ablation}. Using only the hidden states slightly outperforms SAPLMA. As using only the PC hidden states yield reasonable performance, these are adopted into the complete SIVR. On the other hand, with just CovDet and CircVar, we achieve an AUC of 71.31\%, only slightly worse than SAPLMA. Further inclusion of token entropy and hidden states boost performance considerably.
Notably, even though internal variance is only 3-dimensional compared to the much higher dimensional hidden states, using internal variance as features alone yields comparable results to using the full hidden states. This demonstrates that internal variance is informative as training features. The combination of both internal variance and hidden states in the full SIVR framework gives the optimal results.
As the RNN is not as effective as the transformer encoder, transformer is used in the main method.

\begin{figure*}[ht]
    \centering
    \includegraphics[width=\linewidth]{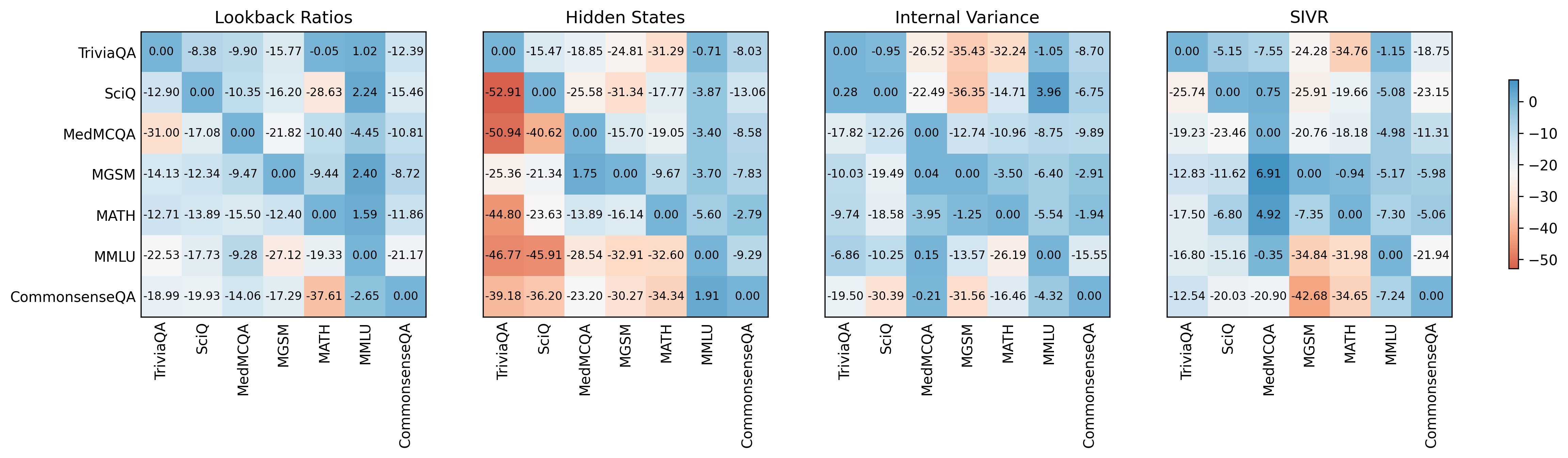}
    \caption{Change in AUC under OOD setting with Ministral-8B-Instruct. Training and test data are on vertical and horizontal axis, respectively.}
    \label{fig:ood_auc}
\end{figure*}

\begin{table}[ht]
    \centering
    \setlength{\tabcolsep}{1.15pt}
    \renewcommand{\arraystretch}{1.25}
    \resizebox{\linewidth}{!}{
    \begin{tabular}{l| c c c c c c c c | c}
         \toprule
         \textbf{Test On} & Trivia & SciQ & MCQA & MGSM & MATH & MMLU & CSQA & Avg & Test AUC\\
         \midrule
         \textbf{Lookback Ratios} & -18.71 & -14.89 & -11.43 & -18.43 & -17.58 & 0.02 & -13.4 & -13.49 & 55.74 \\
         \textbf{Hidden States} & -43.32 & -30.53 & -18.05 & -25.19 & -24.12 & -2.56 & -8.26 & -21.72 & 49.44 \\
         \textbf{Internal Variance} & -10.61 & -15.32 & -8.83 & -21.82 & -17.34 & -3.68 & -7.62 & -12.18 & 60.56 \\
         \textbf{SIVR} & -17.44 & -13.7 & -2.7 & -25.97 & -23.36 & -5.16 & -14.36 & -14.67 & 58.67 \\
         \bottomrule
    \end{tabular}
    }
    \caption{Average change in AUC with out-of-distribution training data.}
    \label{tab:ood_summary}
\end{table}

\paragraph{Out-of-distribution (OOD) Evaluation.} While the supervised paradigm is SOTA, performance usually degrades drastically under the OOD setting where test data differ from the training set. Specifically, to test and compare the robustness of SIVR, we utilise:
\begin{enumerate*}[(i)]
    \item lookback ratios;
    \item only the hidden states;
    \item only the internal variance; and
    \item the full SIVR features.
\end{enumerate*}
Lookback ratios were specifically selected for comparison as they boast strong transfer capability. \Cref{fig:ood_auc} visualises the change in AUC, and \Cref{tab:ood_summary} summarises the result for each configuration. As expected, a considerable decline in AUC is observed in all cases.
Nonetheless, internal variance is the most robust, leading to only a 12.18\% decrease compared to using hidden states which results in a drop of 21.72\%.
OOD AUC with internal variance even exceeds that using lookback ratios by 4.8\%, which were designed to transfer across tasks.
We attribute this to the fact that hidden states are higher dimensional and can encode more task-specific information, making them easier to overfit, whereas internal variance captures more invariant cross-layer dispersion patterns associated with errors, hence better transfer.
This behaviour is also aligned with prior observations that hidden-states-based probes can overfit and degrade substantially when transferred to OOD tasks \citep{chuang-etal-2024-lookback}.
AUPR and FPR@95, provided in Appendix~\ref{subsec:extra_ood}, further corroborate our findings. Overall, this underscores the contribution of internal variance, particularly to improve generalisability.

\paragraph{Training Data Size.} \Cref{fig:data_size} plots the evaluation with the three metrics for varying number of training instances. More training data benefit performance. However, even with only 128 data points, a satisfactory result is achieved, reducing the dependence on training data size.

\begin{figure}[htbp]
    \centering
    \includegraphics[width=\linewidth]{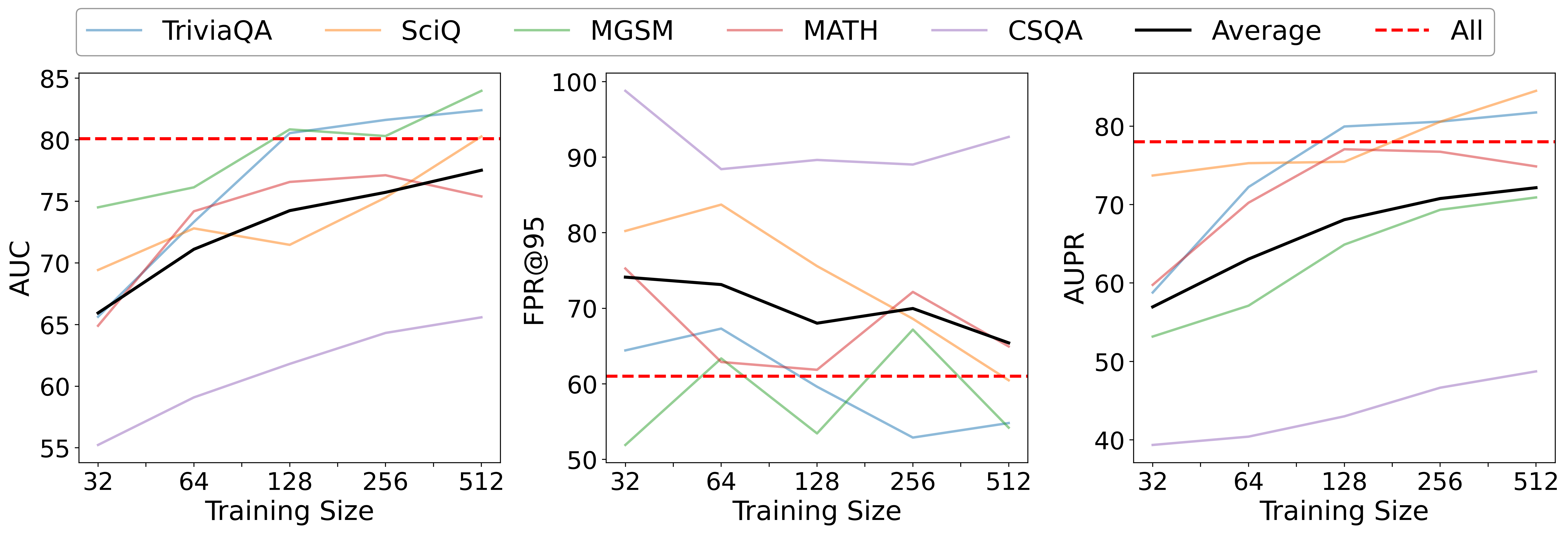}
    \caption{Effect of training data size.}
    \label{fig:data_size}
\end{figure}

\section{Related Work}

Interest in uncertainty estimation for language models has led to a surge in proposals. 
Recent works build on conventional information-based approaches, such as entropy, and propose adaptations specific to language generation by combining logit-level and language-level uncertainty estimation \citep{kuhn2023semantic, duan-etal-2024-shifting, zhang2023enhancing}.
\citet{kuhn2023semantic} noticed that simple information-based methods inappropriately increased uncertainty for syntactically different but semantically identical generations, and introduced semantic uncertainty to address this discrepancy. 
\citet{duan-etal-2024-shifting} proposed SAR to eliminate the effects of prevalent but semantically trivial tokens.
Concerned with a similar issue, Stronger Focus \citep{zhang2023enhancing} estimates entropy, weighted by attention values. \citet{vazhentsev2025uncertainty} discovered that, for hallucinated tokens, attention weights noticeably drop for ``uncertainty-aware'' heads, which are simply those with the highest average attention. Subsequently, confidence scores are computed for these heads and aggregated for the sequence. 

To extract internal knowledge in LLMs, the next suite of techniques formulate an uncertainty score from model behaviour, mainly focusing on the hidden states.
At the broadest level, we categorise these techniques into probe-free and probe-based methods.
Probe-free methods utilise the hidden states to derive an uncertainty score. They can be further divided into
\begin{enumerate*}[(i)]
    \item Sampling-based, such as EigenScore \citep{chen2024inside} which measures consistency in hidden states of a set of sampled responses;
    \item density-based approaches, which take inspiration from out-of-distribution detection \citep{yoo2022detection, ren2022out}, and compute some measure of distance, notably the Mahalanobis distance \citep{lee2018simple} to the data centroid; and
    \item single-call approaches, such as CoE \citep{wang2024latent} which tracks the change in hidden states of successive layers from a single generation as a reflection of predictive uncertainty.
\end{enumerate*}

On the other hand, probe-based methods train a lightweight classifier or \emph{probe} on features extracted from the LLM.
\citet{azaria2023internal} took hidden states at the last layer of the last token directly.
Subsequent works focus on extracting more informative features.
For example, \citet{vazhentsev2025token} computed their Mahalanobis distances of embeddings at each layer, averaged over tokens in the sequence, and concatenated them with the maximum sequence probability. On top of activation maps at each layer, \citet{he2024llm} also considered the top-$k$ probabilities and indices of the last token.
Other works seek to alleviate the burden of label annotation, for instance, through automatic data generation and labelling with high-capacity commercial models \citep{su2024unsupervised, vazhentsev2025uncertainty}, or by utilising alternative signal sources from the model itself, like self-verbalised confidence \citep{srey-etal-2025-unsupervised}, as soft pseudolabels.
However, performance, especially for logits and language-based scores, is normally subpar compared to simpler methods \citep{vashurin2025benchmarking}, and often, they rely on sampling, resulting in high computational costs.
Methods that formulate uncertainty using hidden states, such as CoE \citep{wang2024latent}, have strict assumptions. Meanwhile, their probe-based counterparts are empirically strong, but exhibit limited generalisation.
To fill this gap, SIVR introduces a principled label-efficient, and generalisable sequence-aware signal.

Moreover, current methods mostly output an uncertainty score assessed solely on the last token \citep{he2024llm}, mean token \citep{wang2024latent}, or as a mean of token scores \citep{vazhentsev2025uncertainty}, disregarding relevant tokens and information that encode uncertainty. \citet{shelmanov2025head} passed attention features through a transformer, and averaged encoder outputs across claim tokens before classification. In contrast, SIVR operates without claim segmentation, uses a new family of cross-layer internal-variance features, and preserves token order to 
learn sequence-level dispersion patterns, enabling fine-grained token-level attribution even when claim spans are unavailable (see examples in Appendix~\ref{subsec:example}).

\section{Conclusion}

In this work, we introduced SIVR, a novel factual detection framework that tracks model internal dispersion as it generates a response. Given the hidden states at each token, we calculate their internal variance, a measure of their dispersion between layers. 
Internal variance, along with the hidden states, are extracted as training features for sequence classification. Following our framework, SIVR dynamically learns patterns indicative of factual correctness. SIVR demonstrates strong and stable performance, consistently surpassing competitive baselines. Crucially, we show that internal variance significantly enhances out-of-distribution generalisation. Moreover, a large training dataset is not needed, outlining the potential for real-world applications.
In future work, there are several directions to consider. Other informative features, such as semantic importance, could be incorporated as extra signals for uncertainty. 
More broadly, to reduce reliance on annotated data, unsupervised approaches leveraging internal variance offer a promising direction towards practical solutions for factual detection.
Further integration into the decoding process can enable inference-time hallucination detection and mitigation, and improve model reliability.

\section*{Limitations}

We believe our work has the following limitations.

\paragraph{Supervised Learning.}
While SIVR remains relatively robust to OOD tasks and works well with few data points, it is still supervised and requires an annotated training data.

\paragraph{LLM Variety.} We experimented with moderately sized LLMs, ranging from 3 to 14 billion parameters. Experiments with more capable models of larger size could further validate our claims.

\paragraph{Interpretability.} Our work opens up new avenues for exploring the interpretability of the training features at each token. 
In Appendix~\ref{subsec:example}, we have demonstrated the potential to leverage attribution methods to study the contribution of each token to hallucination risk. Future work could consider token or claim-level evaluation with groundtruths.

\section*{Acknowledgements}

We are grateful to all anonymous reviewers for providing constructive and helpful feedback to strengthen our work. This research is supported by the National Research Foundation, Singapore under its National Large Language Models Funding Initiative (AISG Award No: AISG-NMLP-2024-005).
\bibliography{ref}

\clearpage
\appendix

\section{Theoretical Analysis}

\subsection{Generalised Variance Bound}
\label{subsec:gv_cv}

\Cref{prop:gv_bound} demonstrates that generalised variance is constrained by both sources of magnitude (radial) and directional variation. Using generalised variance alone is insufficient as it loses information about the type of dispersion. For instance, when there is little variation in one dimension, generalised variance may vanish, even though the remaining directions exhibit high variability.

\begin{prop}
\label{prop:gv_bound}
Let $\Sigma$ denote the sample covariance for hidden states across layers. The generalised variance $\det \Sigma$ is bounded by the radial variation $\operatorname{Var}(\|\bm{h}\|)$ and (weighted) resultant length $r_w$ as follows:

\begin{equation}
    \log \det \Sigma  \leq d \log \left( \frac{\operatorname{Var}(\|\bm{h}\|) + \mu_{\|\bm{h}\|}^2 (1-r_w^2)}{d} \right),
\end{equation}
where $\mu_{\|\bm{h}\|}=\frac{1}{L+1}\sum_l \|\bm{h}^l\|$ and $r_w=\|\sum_l \bm{h}^l\|/ \sum_l \|\bm{h}^l\| = \|\bar{\bm{h}}\|/\mu_{\|\bm{h}\|}$, with equality if and only if the distribution is isotropic, \textit{i.e.} $\Sigma=\lambda I$.
\end{prop}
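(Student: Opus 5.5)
The plan is to combine AM--GM on the eigenvalues of $\Sigma$ with an explicit formula for $\operatorname{tr}\Sigma$ in terms of norms and the resultant length. Write $n=L+1$ and let $\lambda_1,\dots,\lambda_d\ge 0$ be the eigenvalues of the (population-normalised, $1/n$) sample covariance $\Sigma=\frac1n\sum_l (\bm h^l-\bar{\bm h})(\bm h^l-\bar{\bm h})^\top$. Since $\det\Sigma=\prod_i\lambda_i$ and $\operatorname{tr}\Sigma=\sum_i\lambda_i$, AM--GM gives
\begin{equation*}
\log\det\Sigma=\sum_i\log\lambda_i\le d\log\Big(\frac{\operatorname{tr}\Sigma}{d}\Big),
\end{equation*}
with equality if and only if all $\lambda_i$ coincide, i.e. $\Sigma=\lambda I$ (if some $\lambda_i=0$ the left side is $-\infty$ and the inequality is trivial). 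This step settles both the inequality skeleton and the equality characterisation.

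Next I will compute the trace directly:
\begin{equation*}
\operatorname{tr}\Sigma=\frac1n\sum_l\|\bm h^l-\bar{\bm h}\|^2=\frac1n\sum_l\|\bm h^l\|^2-\|\bar{\bm h}\|^2.
\end{equation*}
The first term splits as the second moment of the norms:
\begin{equation*}
\frac1n\sum_l\|\bm h^l\|^2=\operatorname{Var}(\|\bm h\|)+\mu_{\|\bm h\|}^2,
\end{equation*}
where $\operatorname{Var}$ is the $1/n$ empirical variance of the scalars $\|\bm h^l\|$. By the definition of $r_w$ we have $\|\bar{\bm h}\|=r_w\,\mu_{\|\bm h\|}$, so $\|\bar{\bm h}\|^2=\mu_{\|\bm h\|}^2r_w^2$. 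Substituting gives the exact identity
\begin{equation*}
\operatorname{tr}\Sigma=\operatorname{Var}(\|\bm h\|)+\mu_{\|\bm h\|}^2(1-r_w^2).
\end{equation*}
Plugging this into the AM--GM bound yields the stated inequality. Since the trace step is an identity, equality holds exactly when the AM--GM step is tight, namely when $\Sigma=\lambda I$.

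There is no real obstacle here; the only care needed is on conventions. The statement's $r_w$ involves $\|\sum_l\bm h^l\|/\sum_l\|\bm h^l\|$, and both numerator and denominator carry the factor $n$, which cancels to give $\|\bar{\bm h}\|/\mu_{\|\bm h\|}$. I must also make the normalisation of $\Sigma$ and of $\operatorname{Var}$ consistent, using $1/n$ for both. If the paper's $\Sigma$ uses $1/(n-1)$, the same argument goes through with the right-hand side rescaled by $n/(n-1)$, and I would note this explicitly.
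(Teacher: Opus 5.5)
Your proposal is correct and follows the paper's proof essentially line for line. The paper applies Jensen's inequality to the concave logarithm over the eigenvalues, which is exactly your AM--GM step, to get $\log\det\Sigma\le d\log(\operatorname{tr}\Sigma/d)$. It then uses the same identity $\operatorname{tr}\Sigma=\operatorname{Var}(\|\bm{h}\|)+\mu_{\|\bm{h}\|}^2(1-r_w^2)$, with the $1/(L+1)$ normalisation throughout.

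Your explicit treatment of zero eigenvalues and of the equality case is a small addition the paper leaves implicit. It matters here: the $L+1$ centred layer vectors span at most $L$ dimensions, so whenever $d>L$ the unregularised $\Sigma$ is singular. In that regime the left side is $-\infty$ and the bound is trivial.
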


\begin{proof}

Let $\lambda_1, \ldots, \lambda_d$ be the eigenvalues of $\Sigma$. Then, $\log\det\Sigma=\sum_{i=1}^{d} \log \lambda_i$. Since logarithm is concave, we apply Jensen's inequality:

\begin{equation}
\label{eq:det}
\begin{split}
    \frac{1}{d} \sum_{i=1}^{d} \log \lambda_i &\leq \log \left( \frac{1}{d} \sum_{i=1}^{d}\lambda_i\right) \\
    &= \log \left( \frac{\text{tr}(\Sigma)}{d}\right)
\end{split}
\end{equation}

Next, we decompose the total variation or trace:
\begin{equation}
\begin{split}
    \text{tr}(\Sigma) =& \frac{1}{L+1}\sum_{l=0}^{L} \|\bm{h}^l - \bar{\bm{h}} \|^2 \\
    =& \left(\frac{1}{L+1} \sum_{l=0}^{L} \|\bm{h}^{l}\|^2 \right) - \|\bar{\bm{h}}\|^2
\end{split}
\end{equation}

Writing $\bm{h}^{l}$ as $ \|\bm{h}^{l}\| \hat{\bm{h}^{l}}$, the first term is the second moment of radius, $\mu_{\|\bm{h}\|^2} = \frac{1}{L+1} \sum_{l=0}^{L} \|\bm{h}^{l}\|^2 = \operatorname{Var}(\|\bm{h}\|) + \mu_{\|\bm{h}\|}^2$. The norm of mean vector can be expressed as $\|\bar{\bm{h}}\| = \mu_{\|\bm{h}\|} r_w$. Thus,

\begin{equation}
\label{eq:trace}
    \text{tr}(\Sigma) = \operatorname{Var}(\|\bm{h}\|) + \mu_{\|\bm{h}\|}^2(1-r_w^2)
\end{equation}

Substituting \Cref{eq:trace} into \Cref{eq:det} yields the desired inequality.

\end{proof}

\subsection{Circular Variance and Pairwise Similarity}
\label{subsec:circ_var}

Recall that circular variance at token $t$ is defined as

\begin{equation}
    c_t = 1 - r_t = 1 - \left\| \frac{1}{L+1} \sum_{l=0}^{L} \hat{\bm{h}}_t^{l} \right\|
\end{equation}

Let the mean pairwise cosine similarity be $S_t = \frac{1}{L(L+1)} \sum_{l\neq m} \hat{\bm{h}}_t^{l} \cdot \hat{\bm{h}}_t^{m}$. The hidden states have been normalised such that $\|\hat{\bm{h}}_t^{l}\|=1 \;\forall l$.

\begin{equation}
\begin{split}
    r_t^2 &= \left\|\frac{1}{L+1} \sum_{l=0}^{L} \hat{\bm{h}}_t^{l} \right\|^2 = \frac{1}{(L+1)^2} \sum_{l,m} \hat{\bm{h}}_t^{l} \cdot \hat{\bm{h}}_{t}^{m} \\
    & = \frac{1}{(L+1)^2} \left(\sum_{l=m} \hat{\bm{h}}_t^{l} \cdot \hat{\bm{h}}_{t}^{m} + \sum_{l\neq m} \hat{\bm{h}}_t^{l} \cdot \hat{\bm{h}}_{t}^{m}\right) \\
    & = \frac{1 + LS_t}{L+1}
\end{split}
\end{equation}

\noindent Expressing $c_t$ in terms of $S_t$ yields

\begin{equation}
    c_t = 1 - \sqrt{\frac{1+LS_t}{L+1}}
\end{equation}

Circular variance accounts for the relationship between all pairs at different layers, instead of only subsequent pairs under CoE.

\section{Computational Cost}
\label{sec:compute}

SIVR is far more efficient than existing sampling-based approaches as the additional computation to extract internal variance features is manageable. In particular, the determinant of the covariance is obtained without performing a large $d^2$ eigendecomposition. Instead, we compute the determinant from a small $(L+1)\times (L+1)$ Gram matrix using Sylvester's theorem or equivalently Singular Value Decomposition (SVD). The other terms, circular variance and predictive entropy, are efficient to compute.

\begin{table}[h]
    \centering
    \small
    \begin{tabular}{l c c}
    \toprule
         \textbf{Method} & \makecell{\textbf{Time (s)} \\ TriviaQA / SciQ} & \makecell{\textbf{AUC} \\ TriviaQA / SciQ} \\
         \midrule
         SE & 5.20 / 7.08 & 84.72 / 73.99 \\
         SAR & 3.51 / 4.17 & 84.45 / 79.44 \\
         SAPLMA & 0.25 / 0.35 & 77.94 / 83.62 \\
         \midrule
         SIVR & 0.26 / 0.37 & 89.31 / 85.70 \\
    \bottomrule
    \end{tabular}
    \caption{Computational time of SIVR compared to sampling-based methods, SE and SAR, and SAPLMA.}
    \label{tab:compute}
\end{table}

In \Cref{tab:compute}, we report the computational time with Llama-3.2-3B-Instruct on Nvidia RTX-6000 with 48 GB of GPU memory. We compare with two sampling-based methods, Semantic Entropy (SE) \citep{kuhn2023semantic} and SAR \citep{duan-etal-2024-shifting} implemented via LM-Polygraph \citep{fadeeva-etal-2024-fact}, and SAPLMA. SIVR is lightweight in computation, adding modest overhead to SAPLMA, and achieves better performance.

\section{Experimental Details}
\label{sec:expt_detail}

\subsection{Dataset}

We evaluate on twelve datasets:
\begin{itemize}[leftmargin=*]
    \item Animals and Facts~\citep{azaria2023internal}: we use all 1008 and 613 statements.
    \item TriviaQA~\citep{joshi2017triviaqa}, MedMCQA~\citep{pal2022medmcqa}: we randomly sample 1000 questions from the validation set.
    \item SciQ~\citep{welbl2017crowdsourcing}, MMLU~\citep{hendrycks2020measuring}: the full validation and test set of 1531 and 1000 questions, respectively, are used.
    \item MGSM~\citep{shi2022language}: we select queries from four languages, English, Bengali, Japanese, and Thai, each with 250 instances.
    \item Counterfact~\citep{meng2022locating}, Common Claims~\citep{casper2023explore}, FEVER~\citep{thorne2018fever}, MATH~\citep{hendrycks2021measuring}, CommonsenseQA~\citep{talmor2018commonsenseqa}: we randomly select 1000 instances. For FEVER, only those with labels ``SUPPORTS'' or ``REFUTES'' are chosen.
\end{itemize}

For probe-based methods, the datasets are split 80-20 for training and evaluation. Following \citet{duan-etal-2024-shifting}, for TriviaQA and SciQ, ROUGE-L \citep{lin2004rouge} is adopted as the correctness metric, with a threshold of 0.7. Table~\ref{tab:prompts} provides the prompts used.

\begin{table}[htbp]
\renewcommand\arraystretch{1.1}
    \centering
    \begin{tabular}{p{0.99\columnwidth}}
         \toprule
         \rowcolor{gray!20}
         {\bf Counterfact, Claims, Animals, Facts, FEVER} \\
         Determine whether the given statement is TRUE or FALSE. The Final Answer must only be either `TRUE.' or `FALSE.' only.
         \texttt{\{statement\}} \\
         Let's think step-by-step. \\

         \midrule
         \rowcolor{gray!20}
         {\bf TriviaQA, SciQ} \\
         Be concise, and output only the final answer.
         \texttt{\{question\}} \\

         \midrule
         \rowcolor{gray!20}
         {\bf MedMCQA} \\
         Answer the following multiple choice medical question. The last line of your response should be of the following format: `Answer: \$LETTER' (without quotes) where LETTER is one of ABCD. Think step by step and output the reasoning process before answering.
         \texttt{\{question\}} \\
         
         \midrule
         \rowcolor{gray!20}
         {\bf MGSM} \\
         Solve this math problem. Give the reasoning steps before giving the final answer on the last line by itself in the format of ``Answer:''. Do not add anything other than the integer answer after ``Answer:''
         \texttt{\{question\}} \\
         
         \midrule
         \rowcolor{gray!20}
         {\bf MATH} \\
         \texttt{\{question\}}
         Please reason step by step, and put your final answer within \verb|\\|boxed{{}} \verb|\\| \\

         \midrule
         \rowcolor{gray!20}
         {\bf MMLU} \\
         Answer the following multiple choice question. The last line of your response should be of the following format: `Answer: \$LETTER' (without quotes) where LETTER is one of ABCD. Think step by step before answering.
         \texttt{\{question\}} \\

         \midrule
         \rowcolor{gray!20}
         {\bf CommonsenseQA} \\
         Answer the following multiple choice common-sense reasoning question. The last line of your response should be of the following format: `Answer: \$LETTER' (without quotes) where LETTER is one of ABCDE. Think step by step and output the reasoning process before answering.
         \texttt{\{question\}} \\
         \bottomrule
    \end{tabular}
    \caption{Prompts for each dataset.}
    \label{tab:prompts}
\end{table}

\subsection{Classifier Details}

A simple transformer encoder model is used for the sequence classification task. The architecture is as shown in \Cref{fig:transformer_arch}. Adam optimiser is used with learning rate $10^{-4}$ and weight decay $10^{-5}$.

\begin{figure}
    \centering
    \includegraphics[width=\linewidth]{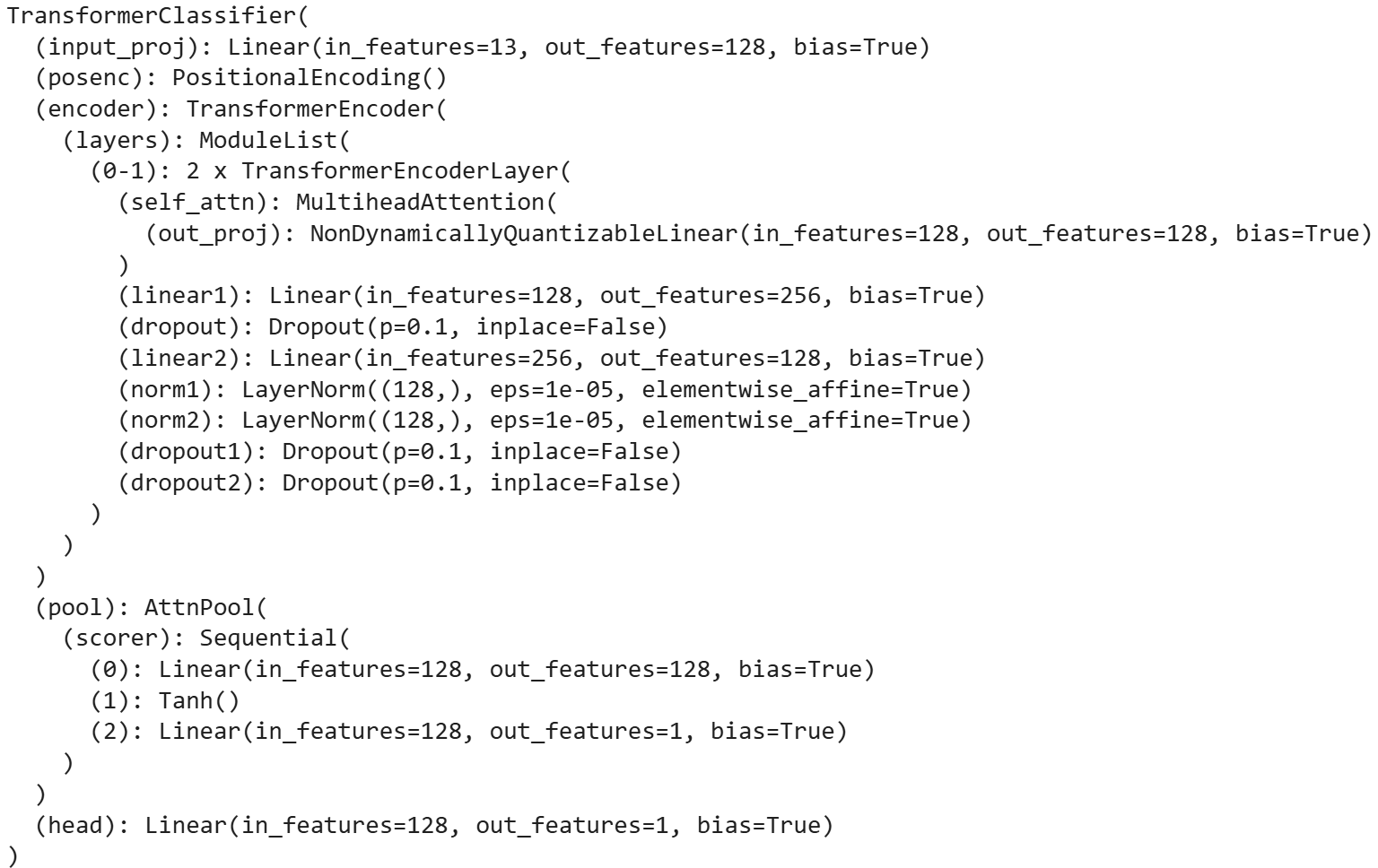}
    \caption{Model architecture of classifier.}
    \label{fig:transformer_arch}
\end{figure}

\section{Additional Experimental Results}
\label{sec:extra_expt}

\subsection{Qwen-3 Experiments}
\label{subsec:qwen_result}

\begin{table*}[t!] 
\renewcommand{\arraystretch}{1.35}
\centering
\setlength{\tabcolsep}{5pt}

\begin{subtable}{\textwidth}
\resizebox{\textwidth}{!}{

}
\caption{Qwen-3-4B-Instruct}
\end{subtable}

\begin{subtable}{\textwidth}
\resizebox{\textwidth}{!}{
%
}
\caption{Qwen-3-14B}
\end{subtable}

\caption{Qwen-3 performance.}
\label{tab:qwen_result}
\end{table*}

\begin{table}[htbp]
    \centering
    \resizebox{\linewidth}{!}{
    %
    }
    \caption{AUC with Llama-3.2-3B-Instruct for varying input lengths.}
    \label{tab:input_len}
\end{table}

\begin{table}[htbp]
\centering

\begin{subtable}[t]{\linewidth}
    \centering
    \setlength{\tabcolsep}{1pt}
    \renewcommand{\arraystretch}{1.25}
    \resizebox{\linewidth}{!}{
    %
    }
    \caption{Average change in FPR@95.}
\end{subtable}

\begin{subtable}[t]{\linewidth}
    \centering
    \setlength{\tabcolsep}{1pt}
    \renewcommand{\arraystretch}{1.25}
    \resizebox{\linewidth}{!}{
    %
    }
    \caption{Average change in AUPR.}
\end{subtable}

\label{tab:ood_more}
\caption{Summary of performance with OOD data.}

\end{table}

\begin{table*}[htbp] 
\renewcommand{\arraystretch}{1.35}
\centering
\setlength{\tabcolsep}{1pt}

\begin{subtable}{\textwidth}
\resizebox{\textwidth}{!}{
%
}

\caption{Llama-3.2-3B-Instruct}
\end{subtable}

\begin{subtable}{\textwidth}
\resizebox{\textwidth}{!}{
%
}

\caption{Llama-3.1-8B-Instruct}
\end{subtable}

\caption{Ablation with Llama models.}
\label{tab:ablation_meta}
\end{table*}

\begin{table}[htbp] 
\renewcommand{\arraystretch}{1.2}
\centering
\setlength{\tabcolsep}{0.9pt}

\begin{subtable}{0.5\textwidth}
\resizebox{\textwidth}{!}{
%
}
\caption{Llama-3.2-3B-Instruct}
\end{subtable}

\begin{subtable}{0.5\textwidth}
\resizebox{\textwidth}{!}{
%
}
\caption{Llama-3.1-8B-Instruct}
\end{subtable}

\begin{subtable}{0.5\textwidth}
\resizebox{\textwidth}{!}{
%
}
\caption{Ministral-8B-Instruct}
\end{subtable}

\begin{subtable}{0.5\textwidth}
\resizebox{\textwidth}{!}{
%
}
\caption{Qwen-3-4B-Instruct}
\end{subtable}

\begin{subtable}{0.5\textwidth}
\resizebox{\textwidth}{!}{
%
}
\caption{Qwen-3-14B}
\end{subtable}

\caption{Results for individual true-false topics.}
\label{tab:true_false}
\end{table}

\Cref{tab:qwen_result} shows the evaluation with Qwen-3 models of 4B and 14B parameter size. The results are in line with other model families and sizes, with SIVR consistently outperforming its competitors.

\begin{table}[htbp]
\renewcommand\arraystretch{1}
    \centering
    %
    \caption{Qualitative examples. Red implies contribution to hallucination risk.}
    \label{tab:example}
\end{table}

\subsection{Fact-Checking Datasets}
\label{subsec:topic}
\Cref{tab:true_false} provides the results for each dataset with explicit true-false groundtruths.

\subsection{More Ablation}

We report ablation for the Meta models in \Cref{tab:ablation_meta}.

\subsection{Number of Principal Components}
\label{subsec:pc_ablation}

We adopt principle component analysis to avoid overfitting to high-dimensional, model-specific hidden states and keep the supervised head lightweight. This practice is consistent with prior works, \textit{e.g.} \citep{vazhentsev2025token}. We analyse the dependence on the number of principal components (PC) in \Cref{tab:pc_ablation}, and find that performance is stable with 10 PCs.

\begin{table}[htbp]
    \centering
    \setlength{\tabcolsep}{1pt}
    \renewcommand{\arraystretch}{1.25}
    \resizebox{\linewidth}{!}{
    \begin{tabular}{l| c c c c c}
         \toprule
         \textbf{\# PC} & 2 & 5 & 10 & 20 & 30 \\
         \midrule
         \textbf{Average AUC} & 78.60 & 80.02 & 79.29 & 79.68 &  79.24 \\
         \textbf{Average FPR@95} & 60.01 & 57.35 & 58.46 & 61.68 & 57.89 \\
         \textbf{Average AUPR} & 73.63 & 74.38 & 74.06 & 74.10 & 72.91 \\
         \bottomrule
    \end{tabular}
    }
    \caption{Performance with varying number of principal components (\#PC).}
    \label{tab:pc_ablation}
\end{table}

\subsection{More OOD Evaluation}
\label{subsec:extra_ood}
\Cref{fig:ood_more}, along with \Cref{tab:ood_more}, present the OOD change in FPR@95 and AUPR, respectively, with the Ministral-8B-Instruct. The results align with the observation with AUC as well, showing that internal variance improves robustness with respect to all three metrics.

\subsection{Input Length}

We provide additional analysis for SIVR with respect to the length of the generated sequence. In the table below, we bin the generated sequences into short, medium, and long. We observe that SIVR performs better on longer sequences. We believe that with longer sequences, patterns that indicate factual inaccuracy may be more pronounced.

\subsection{Token Contribution to Hallucination}
\label{subsec:example}

With the sequential classifier, we can get a fine-grained view of which tokens drive prediction by using attribution methods that assign contribution from the input. Integrated Gradients \citep{sundararajan2017axiomatic} is a simple training-free method. \Cref{tab:example} demonstrates how IG can highlight to the user which tokens contributed strongly to hallucination risk.

\begin{figure*}[htb]
    \begin{subfigure}[t]{\textwidth}
    \centering
        \includegraphics[width=\linewidth]{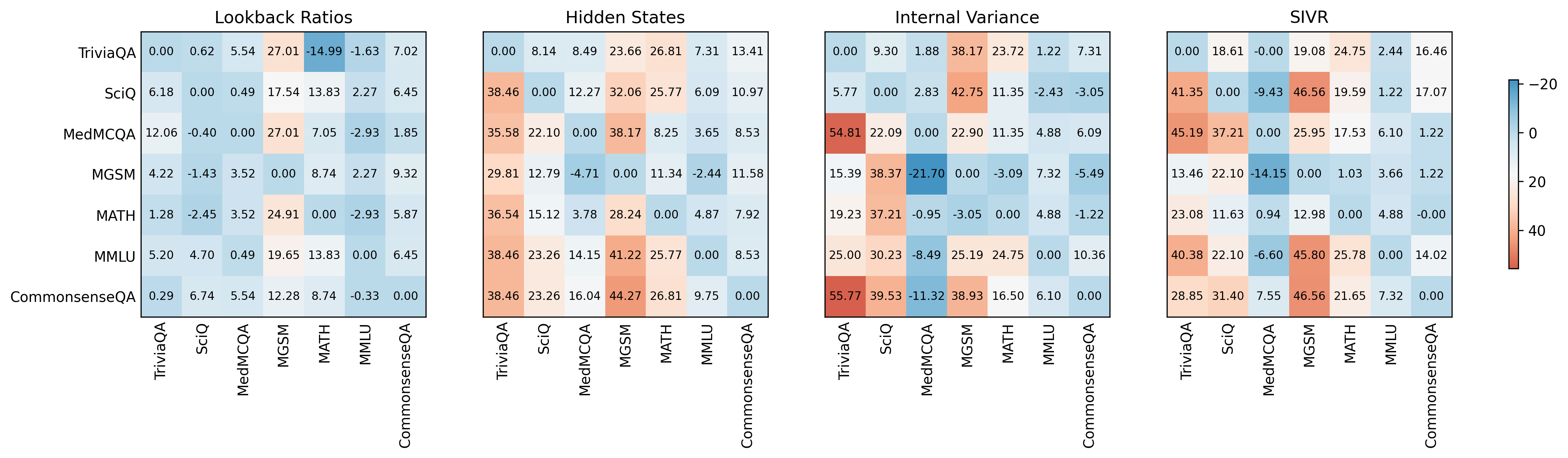}
        \caption{OOD FPR@95 change.}
    \end{subfigure}
    
    \begin{subfigure}[t]{\textwidth}
        \centering
        \includegraphics[width=\linewidth]{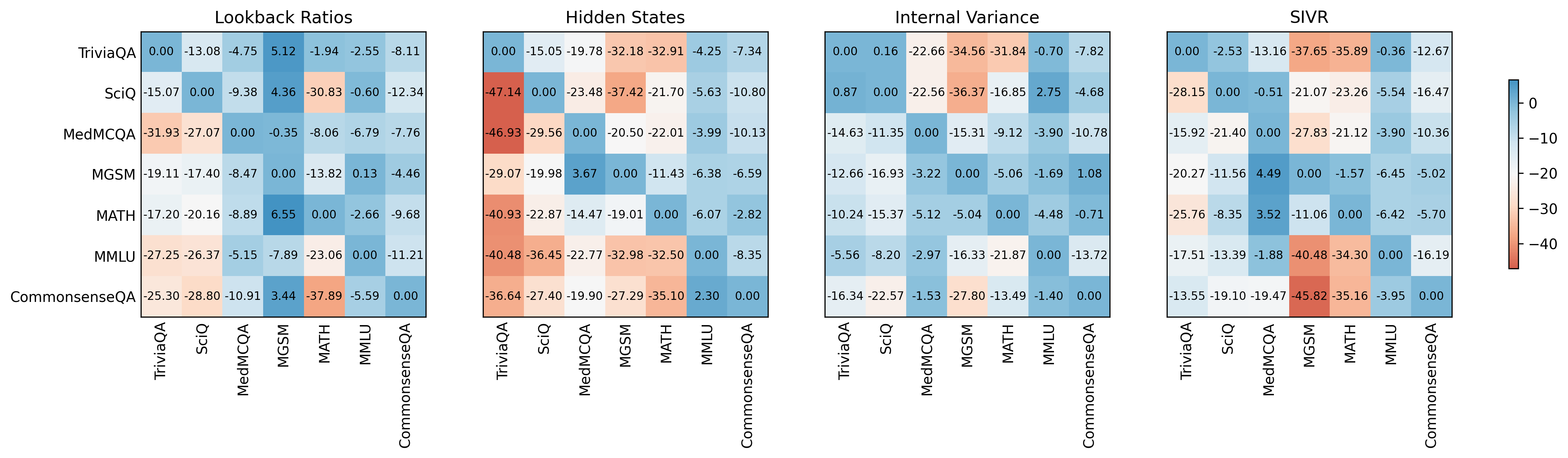}
        \caption{OOD AUPR change.}
    \end{subfigure}
    \caption{More OOD performance with Ministral-8B-Instruct.}
    \label{fig:ood_more}
    \
\end{figure*}

\subsection{Failure Cases}

\Cref{tab:failure} highlights demonstrative failure cases. From these examples, we observe the following: 
\begin{itemize}
    \item False negatives (missed hallucinations): when the error is concentrated in a single decisive token, \textit{e.g.} an important entity, while the rest of the response is fluent and coherent, the dispersion pattern may be weak or not sharply localised, so sequence-level score does not cross hallucination threshold.
    \item False positives (flagging correct answers): when a correct answer contains rare entities or domain-specific terminology, dispersion can be elevated and calibration can be less reliable, leading SIVR to incorrectly flag the response as hallucination.
\end{itemize}
In summary, SIVR can miss localised factual errors, leading to false negatives, and flag rare or technical tokens as incorrect, resulting in false positives.

\begin{table*}[ht]
\renewcommand\arraystretch{1.2}
    \centering
    \resizebox{\linewidth}{!}{
    \begin{tabular}{l|p{3cm}|p{6cm}|c|p{4cm}}
         \toprule
         \multicolumn{1}{c}{Failure Type}  & \multicolumn{1}{c}{Query} & \multicolumn{1}{c}{Response} & \multicolumn{1}{c}{$\hat{p}(y=1)$} & \multicolumn{1}{c}{Comment} \\
         \midrule
         \multirow{2}{*}{False Negative} & Which Gilbert and Sullivan operetta is sub-titled `The Witches Curse'? & The Mikado\colorbox{red!50!white}{</s>} & .251 & Detector wrongly focused on a special token. \\ \cline{2-5}
         & All of the following are the complications in the new born of a diabetic mother except A. Hyperbilirubinemia, B. Hyperglycemia, C. Hypocalcemia, D. Hypomagnesemia & 
         To determine the\colorbox{green!6!white}{ correct} answer,\colorbox{red!100!white}{ let}\colorbox{red!93!white}{'s}\colorbox{red!55!white}{ consider} the\colorbox{green!8!white}{ complications}\colorbox{red!11!white}{ that}\colorbox{green!10!white}{ are}\colorbox{green!5!white}{ typically} associated\colorbox{red!11!white}{ with} newborns of diabetic mothers: ...
        Hyperglycemia: This is a\colorbox{red!9!white}{ common} complication in newborns of\colorbox{red!7!white}{ diabetic} mothers due to the\colorbox{red!7!white}{ mother}'s high blood sugar\colorbox{green!7!white}{ levels}.
        ...
        \colorbox{red!8!white}{Hypomagnesemia}: This is a condition\colorbox{green!7!white}{ where} the newborn\colorbox{green!10!white}{ has} low magnesium\colorbox{green!7!white}{ levels},\colorbox{green!19!white}{ which}\colorbox{green!16!white}{ can}\colorbox{green!16!white}{ also}\colorbox{green!14!white}{ be}\colorbox{green!6!white}{ a} complication in newborn\colorbox{red!5!white}{s} of\colorbox{red!11!white}{ diabetic}\colorbox{red!9!white}{ mothers}\colorbox{red!19!white}{.
        
        }\colorbox{green!32!white}{Given} this information, the complication that is not\colorbox{green!6!white}{ typically}\colorbox{green!8!white}{ associated} with newborns of diabetic mothers is:
        \colorbox{green!13!white}{Answer}:\colorbox{red!6!white}{ D}</s>
         & .467 & 
         Detector did identify the final answer ``D'' as hallucinatory. However, sequence level score is dominated by contributions from other tokens, like semantically less important ones (``let's consider''), wrongly obscuring the predicted probability of hallucination.
         \\ 
         
         \midrule
         \multirow{2}{*}{False Positive} & Under which British monarch was Warren Hastings appointed first Governor of the Presidency of Fort William (Bengal) and the head of the Supreme Council of Bengal, effectively the first Governor-General of India? & Warren Hastings was appointed \colorbox{green!18!white}{by} \colorbox{green!35!white}{King George III}. & .536 & Detector focused on appropriate segment, but does not recognise it as hallucination. \\ \cline{2-5}
         & In female adrenal gland secretes which hormone? A. Progesterone, B. Testosterone, C. Estrogen, D. DHEA & The adrenal glands are \colorbox{red!14!white}{responsible} for producing several hormones, \colorbox{red!14!white}{including} cortisol, aldosterone, \colorbox{red!11!white}{and} androgens. \colorbox{red!40!white}{Among the options given}, \colorbox{red!30!white}{DHEA (Dehydroepiandrosterone)} is an \colorbox{red!20!white}{androgen} produced \colorbox{green!20!white}{by} the \colorbox{green!16!white}{adrenal glands}. \colorbox{green!24!white}{It} is a \colorbox{red!15!white}{precursor} to both testosterone \colorbox{green!12!white}{and} estrogen. Answer: D & .915 &  Detector flags technical terminology as hallucinatory, potentially due to limited training data. \\
         \bottomrule
    \end{tabular}
    }
    \caption{Failure cases.}
    \label{tab:failure}
\end{table*}

\end{document}